\newtheorem{theorem}{Theorem}
\newtheorem{proposition}{Proposition}
\newtheorem{corollary}{Corollary}
\newtheorem{lemma}{Lemma}
\def\eqref#1{Eq.(\ref{#1})}
\def\1{\bm{1}}
\def\norm#1{\lVert #1 \rVert}
\newcommand*\diff{\mathop{}\!\mathrm{d}}
\def\vx{{\bm{x}}}
\DeclareMathAlphabet{\mathsfit}{\encodingdefault}{\sfdefault}{m}{sl}
\SetMathAlphabet{\mathsfit}{bold}{\encodingdefault}{\sfdefault}{bx}{n}
\def\gF{{\mathcal{F}}}
\def\gL{{\mathcal{L}}}
\def\gP{{\mathcal{P}}}
\def\gR{{\mathcal{R}}}
\def\gX{{\mathcal{X}}}
\newcommand{\iftpq}{I_{f}(T; P, Q)}
\newcommand{\iwtrpq}{I_{W}(T; P, Q)}
\newcommand{\ltrpq}{\ell_f(T, r; P, Q)}
\newcommand{\LRtpq}{\gL^{\gR}_f(T;P,Q)}
\newcommand{\ipm}{\mathrm{IPM}}
\newcommand{\pdata}{P_{\rm{data}}}
\newcommand{\E}{\mathbb{E}}
\newcommand{\R}{\mathbb{R}}
\newcommand{\Var}{\mathrm{Var}}
\DeclareMathOperator*{\argmin}{arg\,min}
\newcommand{\bb}[1]{{\mathbb{#1}}}
\def\@onedot{\ifx\@let@token.\else.\null\fi\xspace}
\DeclareRobustCommand\onedot{\futurelet\@let@token\@onedot}
\def\iid{i.i.d\onedot}
\icmltitlerunning{}
\begin{document}

\twocolumn[
\icmltitle{Bridging the Gap Between $f$-GANs and Wasserstein GANs}

\icmlsetsymbol{equal}{*}

\begin{icmlauthorlist}
\icmlauthor{Jiaming Song}{to}
\icmlauthor{Stefano Ermon}{to}
\end{icmlauthorlist}

\icmlaffiliation{to}{Stanford University}

\icmlcorrespondingauthor{Jimaing Song}{tsong@cs.stanford.edu}

\icmlkeywords{generative adversarial networks, f-gans, wasserstein gans}

\vskip 0.3in
]

\printAffiliationsAndNotice{}  %

\begin{abstract}
    Generative adversarial networks (GANs) variants approximately minimize divergences between the model and the data distribution using a discriminator. Wasserstein GANs (WGANs) enjoy superior empirical performance, however, unlike in $f$-GANs, the discriminator does not provide an estimate for the ratio between model and data densities, which is useful in applications such as inverse reinforcement learning. To overcome this limitation, we propose an new training objective where we additionally optimize over a set of importance weights over the generated samples. By suitably constraining the feasible set of importance weights, we obtain a family of objectives which includes and generalizes the original $f$-GAN and WGAN objectives. We show that a natural extension outperforms WGANs while providing density ratios as in $f$-GAN, and demonstrate empirical success on distribution modeling, density ratio estimation and image generation. %

\end{abstract}

\section{Introduction}
Learning generative models to sample from complex, high-dimensional distributions is an important task in machine learning with many important applications, such as image generation~\citep{kingma2013auto}, imitation learning~\citep{ho2016generative} and representation learning~\citep{chen2016infogan}. 
Generative adversarial networks (GANs,~\citet{goodfellow2014generative}) are likelihood-free deep generative models~\citep{mohamed2016learning} based on finding the equilibrium of a two-player minimax game between a generator and a critic (discriminator). Assuming the optimal critic is obtained, one can cast the GAN learning procedure as minimizing a discrepancy measure between the distribution induced by the generator and the training data distribution. 

Various GAN learning procedures have been proposed for different discrepancy measures. 
$f$-GANs~\citep{nowozin2016f} minimize a variational approximation of the $f$-divergence between two distributions~\citep{csiszar1964informationstheoretische,nguyen2008estimating}. In this case, the critic acts as a density ratio estimator~\citep{uehara2016generative,grover2017boosted}, i.e., 
it estimates if points are more likely to be generated by the data or the generator distribution.
This includes the original GAN approach~\citep{goodfellow2014generative} which can be seen as minimizing a variational approximation to the Jensen-Shannon divergence. Knowledge of the density ratio between two distributions can be used for importance sampling and in a range of practical applications such as mutual information estimation~\citep{devon2018learning}, off-policy policy evaluation~\citep{liu2018breaking}, and de-biasing of generative models~\citep{grover2019bias}. %

Another family of GAN approaches are developed based on Integral Probability Metrics (IPMs,~\citet{muller1997integral}), where the critic (discriminator) is restricted to particular function families. For the family of Lipschitz-$1$ functions, the IPM reduces to the Wasserstein-1 or earth mover's distance~\citep{rubner2000earth}, which motivates the Wasserstein GAN (WGAN,~\citet{arjovsky2017wasserstein}) setting. Various approaches have been applied to enforce Lipschitzness, including weight clipping~\citep{arjovsky2017wasserstein}, gradient penalty~\citep{gulrajani2017improved} and spectral normalization~\citep{miyato2018spectral}. Despite its strong empirical success in image generation~\citep{karras2017progressive,brock2018large}, 
the learned critic cannot be interpreted as a density ratio estimator, which limits its usefulness for importance sampling or other GAN-related applications such as inverse reinforcement learning~\citep{yu2019multi}.  %

In this paper, we address this problem via a generalized view of $f$-GANs and WGANs. The generalized view introduces importance weights over the generated samples in the critic objective, allowing prioritization over the training of different samples. The algorithm designer can select suitable feasible sets to constrain the importance weights; we show that both $f$-GAN and WGAN are special cases to this generalization when specific feasible sets are considered. We further discuss cases that select alternative feasible sets where divergences other than $f$-divergence and IPMs can be obtained. 

To derive concrete algorithms, we turn to a case where the importance weights belong to the set of valid density ratios over the generated distribution. In certain cases, the optimal importance weights can be obtained via closed-form solutions, bypassing the need to perform an additional inner-loop optimization.
We discuss one such approach, named KL-Wasserstein GAN (KL-WGAN), that is easy to implement from existing WGAN approaches, and is compatible with state-of-the-art GAN architectures. We evaluate KL-WGAN empirically on distribution modeling, density estimation and image generation tasks. Empirical results demonstrate that KL-WGAN enjoys superior quantitative performance compared to its WGAN counterparts on several benchmarks. %

\section{Preliminaries}

\paragraph{Notations} Let $X$ denote a random variable with separable sample space $\gX$ and let $\gP(\gX)$ denote the set of all probability measures over the Borel $\sigma$-algebra on $\gX$. We use $P$, $Q$ to denote probabiliy measures, and $P \ll Q$ to denote $P$ is absolutely continuous with respect to $Q$, i.e. the Radon-Nikodym derivative $\diff P / \diff Q$ exists.
Under $Q \in \gP(\gX)$, the $p$-norm of a function $r: \gX \to \R$ is defined as 
\begin{align}
    \Vert r \Vert_p := \left(\int |r(\vx)|^p \mathrm{d} Q(\vx)\right)^{1/p},
\end{align} with $\norm{r}_\infty = \lim_{p \to \infty} \norm{r}_p$. 
The set of locally $p$-integrable functions is defined as 
\begin{align}
    L^p(Q) := \{ r: \gX \to \bb{R} : \norm{r}_p < \infty \},
\end{align}
i.e. its norm with respect to $Q$ is finite. We denote $L^p_{\geq 0}(Q) := \{r \in L^p(Q) : \forall \vx \in \gX, r(\vx) \geq 0 \}$
which considers non-negative functions in $L^p(Q)$. The space of probability measures wrt. $Q$ is defined as 
\begin{align}
    \Delta(Q) := \{ r \in L^1_{\geq 0}(Q) : \Vert r \Vert_1 = 1 \}. \label{eq:delta-q-def}
\end{align}
For example, for any $P \ll Q$, $\diff P / \diff Q \in \Delta(Q)$ because $\int (\diff P / \diff Q) \diff Q = 1$. 
We define $\mathds{1}$ such that $\forall \vx \in \gX$, $\mathds{1}(\vx) = 1$, and define $\mathrm{im}(\cdot)$ and $\mathrm{dom}(\cdot)$ as image and domain of a function respectively.

\paragraph{Fenchel duality} 

For functions $g: \gX \to \R$ defined over a Banach space $\gX$, the Fenchel dual of $g$, $g^{*}: \gX^{*} \to \R$ is defined over the dual space $\gX^{*}$ by:
\begin{align}
    g^{*}(\vx^{*}) := \sup_{\vx \in \gX} \langle \vx^{*}, \vx \rangle - g(\vx),
\end{align}
where $\langle \cdot, \cdot \rangle$ is the duality paring. For example, the dual space of 
$\R^d$ is also $\R^d$ and $\langle \cdot, \cdot \rangle$ is the usual inner product~\citep{rockafellar1970convex}.

\paragraph{Generative adversarial networks} In generative adversarial networks (GANs,~\citet{goodfellow2014generative}), the goal is to fit an (empirical) data distribution $P_{\mathrm{data}}$ with an implicit generative model over $\gX$, denoted as $Q_\theta \in \gP(\gX)$. $Q_\theta$ is defined implicitly via the process $X = G_\theta(Z)$, where $Z$ is a random variable with a fixed prior distribution. %
Assuming access to $\iid$ samples from $P_{\mathrm{data}}$ and $Q_\theta$, a discriminator $T_\phi: \gX \to [0, 1]$ is used to classify samples from the two distributions, leading to the following objective:
\begin{align}
    \min_\theta \max_{\phi} \bb{E}_{\vx \sim P_{\mathrm{data}}}[\log T_\phi(\vx)] + \bb{E}_{\vx \sim Q_\theta}[\log (1 - T_\phi(\vx))]. \nonumber
\end{align}
If we have infinite samples from $P_{\mathrm{data}}$, and $T_\phi$ and $Q_\theta$ are 
sufficiently 
expressive, %
then the above minimax objective will reach an equilibrium where $Q_\theta = P_{\mathrm{data}}$ and $T_\phi(\vx) = 1/2$ for all $\vx \in \gX$.

\subsection{Variational Representation of $f$-Divergences}
For any convex and semi-continuous function $f: [0, \infty) \to \R$ satisfying $f(1) = 0$, the $f$-divergence~\citep{csiszar1964informationstheoretische,ali1966general} between two probabilistic measures $P, Q \in \gP(\gX)$ is defined as:
\begin{align}
    D_{f}(P \Vert Q) & := \bb{E}_Q\left[f\left(\frac{\diff P}{\diff Q}\right)\right] \\
    \label{eq:fdiv}
    & = \int_\gX f\left(\frac{\diff P}{\diff Q}(\vx)\right) \diff Q(\vx), 
\end{align}
if $P \ll Q$ and $+\infty$ otherwise. \citet{nguyen2010estimating} derive a general variational method to estimate $f$-divergences given only samples from $P$ and $Q$.

\begin{lemma}[\citet{nguyen2010estimating}]
\label{thm:nwj}
$\forall P, Q \in \gP(\gX)$ such that $P \ll Q$, and differentiable $f$:
\begin{gather}
    D_{f}(P \Vert Q) = \sup_{T \in L^\infty(Q)} \iftpq,
    \label{eq:nwj} \\
    \text{where} \quad \iftpq := \bb{E}_P[T(\vx)] - \bb{E}_Q[f^{*}(T(\vx))] \label{eq:iftpq}
\end{gather}
and the supremum is achieved when 
$T = f'(\diff P / \diff Q) %
$.
\end{lemma}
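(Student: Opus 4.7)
The plan is to prove the variational representation by a pointwise application of Fenchel--Young duality, followed by integration. Since $f$ is convex and lower semi-continuous with $f(1)=0$, the Fenchel biconjugate theorem gives $f = f^{**}$, i.e., for every $u \in [0,\infty)$,
\begin{align}
f(u) = \sup_{t \in \R}\bigl\{\, t\,u - f^{*}(t)\,\bigr\},
\end{align}
and since $f$ is differentiable, the supremum on the right is attained at $t = f'(u)$ by the standard subgradient characterization of Fenchel conjugates.

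Next I would apply this pointwise with $u = (\diff P/\diff Q)(\vx)$, integrate both sides against $Q$, and swap the integral and the supremum to move from a scalar optimization at each $\vx$ to a functional optimization over $T$. For any measurable $T : \gX \to \R$,
\begin{align}
f\!\left(\tfrac{\diff P}{\diff Q}(\vx)\right) \;\geq\; T(\vx)\,\tfrac{\diff P}{\diff Q}(\vx) \;-\; f^{*}(T(\vx)),
\end{align}
so integrating against $Q$ and using $\int T \,(\diff P/\diff Q) \diff Q = \int T \diff P = \bb{E}_P[T]$ (which uses $P \ll Q$) yields $D_f(P\Vert Q) \geq \bb{E}_P[T(\vx)] - \bb{E}_Q[f^{*}(T(\vx))] = \iftpq$, valid for every $T$ in the feasible class. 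Taking the supremum over $T \in L^{\infty}(Q)$ gives one direction of the identity.

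For the reverse direction, I would exhibit a maximizer. Set $T^{\star}(\vx) := f'\bigl((\diff P/\diff Q)(\vx)\bigr)$; then by the attainment clause of Fenchel--Young, $T^{\star}(\vx)\cdot(\diff P/\diff Q)(\vx) - f^{*}(T^{\star}(\vx)) = f\bigl((\diff P/\diff Q)(\vx)\bigr)$ pointwise, and integrating against $Q$ recovers $\iftpq[T^{\star}] = D_f(P\Vert Q)$. Combined with the upper bound from the previous step, this proves the sup is achieved at $T^{\star} = f'(\diff P / \diff Q)$.

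The main technical obstacle is measurability and the choice of function class: the pointwise supremum must agree with the functional supremum, which needs $T^{\star}$ to be measurable (true since $f'$ is monotone hence Borel, composed with the measurable Radon--Nikodym derivative) and the integrals $\bb{E}_P[T^{\star}]$, $\bb{E}_Q[f^{*}(T^{\star})]$ to make sense. Strictly, $T^{\star}$ need not lie in $L^{\infty}(Q)$ unless $\diff P/\diff Q$ is essentially bounded; the standard workaround, which I would invoke, is a truncation argument: approximate $T^{\star}$ by $T^{\star}_n := \max(-n, \min(n, T^{\star}))$, each of which is in $L^{\infty}(Q)$, and use monotone/dominated convergence (justified by convexity of $f^{*}$ and $f(1)=0$) to show $\iftpq[T^{\star}_n] \to D_f(P\Vert Q)$, so the supremum over $L^{\infty}(Q)$ is attained in the limit even if not by a single bounded $T$. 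This step is where care is required; everything else follows cleanly from Fenchel duality.
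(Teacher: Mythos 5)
Your proof is correct: the paper states this lemma as a cited result from \citet{nguyen2010estimating} and gives no proof of its own, and your argument --- pointwise Fenchel--Young duality, integration against $Q$ using $P \ll Q$ to convert $\int T\,(\diff P/\diff Q)\,\diff Q$ into $\bb{E}_P[T]$, and attainment at $T = f'(\diff P/\diff Q)$ --- is the standard derivation and exactly the same machinery the paper itself deploys in its proof of Proposition~\ref{thm:lr-f}. Your remark that $f'(\diff P/\diff Q)$ need not lie in $L^\infty(Q)$, so the supremum may only be approached through truncations $T^{\star}_n$, is a genuine subtlety that the lemma's statement glosses over, and your truncation fix is the right way to handle it.
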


In the context of GANs, \citet{nowozin2016f} proposed variational $f$-divergence minimization where one estimates $D_f(\pdata \Vert Q_\theta)$ with the variational lower bound in \eqref{eq:nwj} while minimizing over $\theta$ the estimated divergence. This leads to the $f$-GAN objective:
\begin{align}
    \min_\theta \max_{\phi} \bb{E}_{\vx \sim P_{\mathrm{data}}}[T_\phi(\vx)] - \bb{E}_{\vx \sim Q_\theta}[f^{*} (T_\phi(\vx))],
\end{align}
where the original GAN objective is a special case for $f(u) = u \log u - (u+1) \log (u+1) + 2\log 2$.

\subsection{Integral Probability Metrics and Wasserstein GANs}

For a fixed class of real-valued bounded Borel measurable
functions $\gF$ on $\gX$, the integral probability metric (IPM) based on $\gF$ and between %
$P, Q \in \gP(\gX)$ is defined as:
\begin{align}
    \ipm_{\gF}(P, Q) := \sup_{T \in \gF} \left\vert \int T(\vx) \diff P(\vx) - \int T(\vx) \diff Q(\vx) \right\vert. \nonumber %
\end{align}
If for all $T \in \gF$, $-T \in \gF$ then $\ipm_{\gF}$ forms a metric over $\gP(\gX)$~\citep{muller1997integral}; we assume this is always true for $\gF$ in this paper (so we can remove the absolute values). 
In particular, if $\gF$ is the set of all bounded $1$-Lipschitz functions %
with respect to the metric over $\gX$, then the corresponding IPM becomes the Wasserstein distance between $P$ and $Q$~\citep{villani2008optimal}. This motivates the Wasserstein GAN objective~\citep{arjovsky2017wasserstein}:
\begin{align}
    \min_\theta \max_\phi \bb{E}_{\vx \sim P_{\mathrm{data}}}[T_\phi(\vx)] - \bb{E}_{\vx \sim Q_\theta}[T_\phi(\vx)], \label{eq:wgan}
\end{align}
where $T_\phi$ is regularized to be approximately $k$-Lipschitz for some $k$. Various approaches have been applied to enforce Lipschitzness of neural networks, including weight clipping~\citep{arjovsky2017wasserstein}, gradient penalty~\citep{gulrajani2017improved}, and spetral normalization over the weights~\citep{miyato2018spectral}.

Despite its strong empirical performance, WGAN has two drawbacks. First, unlike $f$-GAN (Lemma \ref{thm:nwj}), it does not naturally recover a density ratio estimator from the critic. Granted, the WGAN objective corresponds to an $f$-GAN one~\citep{sriperumbudur2009on} when $f(x) = 0$ if $x = 1$ and $f(x) = +\infty$ otherwise,
so that $f^{*}(x) = x$; however, we can no longer use Lemma~\ref{thm:nwj} to recover density ratios given an optimal critic $T$, because the 
derivative $f'(x)$
does not exist. %
Second, WGAN places the same weight on the objective for each generated sample, which could be sub-optimal when the generated samples are of different qualities. %

\section{A Generalization of $f$-GANs and WGANs}

In order to achieve the best of both worlds, we propose an alternative generalization to the critic objectives to \emph{both} $f$-GANs and WGANs. Consider the following functional: %
\begin{align}
   & \ltrpq \\
   := \ & \E_{\vx \sim Q}[f(r(\vx))] + \E_{\vx \sim P}[T(\vx)] - \E_{\vx \sim Q}[r(\vx) \cdot T(\vx)] \nonumber
\end{align}
which depends on the distributions $P$ and $Q$, the critic function $T: \gX \to \R$, and an additional function $r: \gX \to \R$. %
For conciseness, we remove the dependency on the argument $\vx$ for $T, r, P, Q$ in the remainder of the paper. 

The function $r: \gX \to \R$ here plays the role of ``importance weights'', as they changes the weights to the critic objective over the generator samples. When $r = \diff P / \diff Q$, the objective above simplifies to $\bb{E}_{Q}[f(\diff P /\diff Q)]$ 
which is exactly the definition of the $f$-divergence between $P$ and $Q$ (Eq. \ref{eq:fdiv}). 

To recover an objective over only the critic $T$, we minimize $\ell_f$ as a function of $r$ over a suitable set $\gR \subseteq L_{\geq 0}^\infty(Q)$, thus eliminating the dependence over $r$:
\begin{align}
    & \LRtpq := \inf_{r \in \gR} \ltrpq
    \label{eq:generalopt} %
\end{align}

We note that the minimization step is performed within a particular set $\gR \subseteq L^\infty(Q)$, which can be selected by the algorithm designer. %
The choice of the set $\gR$ naturally gives rise to different critic objectives.
As we demonstrate below (and in Figure~\ref{fig:fwgan-relation}), we can obtain critic objectives for $f$-GAN as well as WGANs as special cases via different choices of $\gR$ in $\LRtpq$. %

\subsection{Recovering the $f$-GAN Critic Objective} First, we can recover the critic in the $f$-GAN objective by setting $\gR = L^\infty_{\geq 0}(Q)$, which is the set of all non-negative functions in $L^\infty(Q)$. Recall from Lemma~\ref{thm:nwj} the $f$-GAN objective:
\begin{align}
    D_f(P \Vert Q) = \sup_{T \in L^\infty(Q)} \iftpq
\end{align}
where $\iftpq := \bb{E}_P[T] - \bb{E}_Q[f^{*}(T)]$ as defined in Lemma~\ref{thm:nwj}. The following proposition shows that when $\gR = L^\infty_{\geq 0}(Q)$, we recover $I_f = \gL_f^{\gR}$.
\begin{proposition}
\label{thm:lr-f}
Assume that $f$ is differentiable at $[0, \infty)$. $\forall P, Q \in \gP(\gX)$ such that $P \ll Q$, and $\forall T \in \gF \subseteq L^\infty(Q)$ such that $\mathrm{im}(T) \subseteq \mathrm{dom}((f')^{-1})$,
\begin{align}
    \iftpq = \inf_{r \in L^\infty_{\geq 0}(Q)} \ltrpq.
\end{align}
where $I_f(T; P, Q) := \bb{E}_P[T] - E_{Q}[f^{*}(T)]$.
\end{proposition}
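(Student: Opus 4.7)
The plan is to recognize the statement as a straightforward consequence of the Fenchel--Young inequality applied pointwise and integrated against $Q$, with the role of the condition $\mathrm{im}(T)\subseteq \mathrm{dom}((f')^{-1})$ being solely to guarantee that the pointwise maximizer lies in the feasible set $L^\infty_{\geq 0}(Q)$.

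First I would rewrite the gap between the two objectives. Subtracting,
\begin{align}
\ell_f(T,r;P,Q) - I_f(T;P,Q) \;=\; \E_{Q}\bigl[f(r) + f^{*}(T) - rT\bigr],
\end{align}
so the claim reduces to showing that the infimum of the right-hand side over $r \in L^\infty_{\geq 0}(Q)$ equals zero. Since $f$ is convex with domain $[0,\infty)$, its convex conjugate satisfies $f^{*}(t) = \sup_{u\geq 0}(ut - f(u))$, and hence the Fenchel--Young inequality $f(u) + f^{*}(t) \geq ut$ holds for every $u \geq 0$ and every $t \in \R$. Applying this pointwise with $u = r(\vx)$ and $t = T(\vx)$, the integrand is non-negative $Q$-a.e., which immediately gives the ``$\geq$'' direction: $\inf_{r\in L^\infty_{\geq 0}(Q)}\ell_f(T,r;P,Q) \geq I_f(T;P,Q)$.

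For the matching upper bound I would exhibit a concrete $r^{\star} \in L^\infty_{\geq 0}(Q)$ achieving pointwise equality in Fenchel--Young. Since $f$ is differentiable on $[0,\infty)$, equality in $f(u)+f^{*}(t)\geq ut$ at $t = T(\vx)$ is attained exactly when $t = f'(u)$, i.e.\ when $u = (f')^{-1}(t)$. Accordingly I would set
\begin{align}
r^{\star}(\vx) \;:=\; (f')^{-1}(T(\vx)),
\end{align}
which is well-defined everywhere precisely because the hypothesis guarantees $\mathrm{im}(T) \subseteq \mathrm{dom}((f')^{-1})$. By construction $f(r^{\star}(\vx)) + f^{*}(T(\vx)) = r^{\star}(\vx)\,T(\vx)$ for every $\vx$, so $\ell_f(T,r^{\star};P,Q) = I_f(T;P,Q)$, yielding the ``$\leq$'' direction.

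The only technical obstacle I anticipate is verifying that $r^{\star}$ indeed lies in $L^\infty_{\geq 0}(Q)$. Non-negativity is automatic because $f$ is defined on $[0,\infty)$, so $(f')^{-1}$ takes values in $[0,\infty)$. Measurability follows from composing the measurable $T$ with $(f')^{-1}$, which is monotone (hence Borel) as the inverse of the monotone derivative of a convex function. For essential boundedness, since $T \in L^\infty(Q)$ its essential range is a bounded subset of $\mathrm{dom}((f')^{-1})$, and the monotone function $(f')^{-1}$ maps bounded sets to bounded sets; therefore $\|r^{\star}\|_\infty < \infty$. Once these three checks are done, the two inequalities combine to give the claimed identity.
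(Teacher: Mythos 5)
Your proof is correct and follows essentially the same route as the paper's: the pointwise Fenchel--Young inequality gives the lower bound after taking expectations under $Q$, and the choice $r^{\star}=(f')^{-1}\circ T$ attains equality, with the hypothesis on $\mathrm{im}(T)$ ensuring feasibility. Your verification that $r^{\star}\in L^\infty_{\geq 0}(Q)$ (measurability and essential boundedness via monotonicity of $(f')^{-1}$) is in fact slightly more careful than the paper's, which only checks non-negativity.
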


\begin{proof}
From Fenchel's inequality we have for convex $f: \bb{R} \to \bb{R}$, $\forall T(\vx) \in \bb{R}$ and $\forall r(\vx) \geq 0$, 
$
    f(r(\vx)) + f^{*}(T(\vx)) \geq r(\vx) T(\vx)
$
where equality holds when
$T(\vx) = f'(r(\vx))$. Taking the expectation over $Q$, we have
\begin{align}
    \E_{Q}[f(r)] - \bb{E}_Q[rT] \geq  - \E_{Q}[f^{*}(T)]; \label{eq:eq-dual}
\end{align}
applying this to the definition of $\ltrpq$, we have:
\begin{align}
     & \  \ltrpq := \E_{Q}[f(r)] + \bb{E}_P[T] - \bb{E}_{Q}[rT] \nonumber \\
  \geq & \  \bb{E}_P[T] - \bb{E}_Q[f^{*}(T)] = \iftpq. \label{eq:nwj-2}
\end{align}
where the inequality comes from Equation~\ref{eq:eq-dual}.
The inequality becomes an equality when $r(\vx) = (f')^{-1}(T(\vx))$ for all $\vx \in \gX$. We note that such a case can be achieved, i.e., $(f')^{-1}(T) \in L_{\geq 0}^\infty(Q)$,  because $\forall \vx \in \gX, (f')^{-1}(T(\vx)) \in \mathrm{dom}(f) = [0, \infty)$ from the assumption over $\mathrm{im}(T)$. Therefore, taking the infimum over $r \in L^\infty_{\geq 0}(Q)$, we have:
\begin{align}
    \iftpq = \inf_{r \in L^\infty_{\geq 0}(Q)} \ltrpq,
\end{align}
which completes the proof.
\end{proof}

\subsection{Recovering the WGAN Critic Objective} Next, we recover the WGAN critic objective (IPM) by setting $\gR = \{\mathds{1}\}$, where $\mathds{1}(x) = 1$ is a constant function.
First, we can equivalently rewrite the definition of an IPM using the following notation:
\begin{align}
    \ipm_\gF(P, Q) = \sup_{T \in \gF} I_W(T; P, Q)
\end{align}
where $I_W$ represents the critic objective.
We show that $I_W = \gL_f^{\gR}$ when $\gR = \{\mathds{1}\}$ as follows.
\begin{proposition}
\label{thm:lr-w}
$\forall P, Q \in \gP(\gX)$ such that $P \ll Q$, and $\forall T \in \gF \subseteq L^\infty(Q)$:
\begin{align}
    I_W(T; P, Q) = \inf_{r \in \{\mathds{1}\}} \ltrpq
\end{align}
where $\iwtrpq := \bb{E}_P[T] - \bb{E}_{Q}[T]$.
\end{proposition}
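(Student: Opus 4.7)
The plan is to observe that the infimum in the statement is taken over a singleton set $\gR = \{\mathds{1}\}$, so it collapses to the single value $\ell_f(T, \mathds{1}; P, Q)$. Hence the whole claim reduces to evaluating $\ell_f$ at $r = \mathds{1}$ and matching it to $I_W(T;P,Q) = \E_P[T] - \E_Q[T]$.

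Concretely, I would proceed as follows. First, by the definition of $\ltrpq$ recalled in the excerpt,
\begin{align*}
\ell_f(T, \mathds{1}; P, Q) = \E_Q[f(\mathds{1})] + \E_P[T] - \E_Q[\mathds{1}\cdot T].
\end{align*}
Next, I would use the standing assumption on $f$ used to define $f$-divergences, namely $f(1) = 0$, which kills the first term since $\mathds{1}(\vx) = 1$ for every $\vx \in \gX$, so $\E_Q[f(\mathds{1})] = f(1) = 0$. Similarly, $\E_Q[\mathds{1}\cdot T] = \E_Q[T]$. Combining these,
\begin{align*}
\ell_f(T, \mathds{1}; P, Q) = \E_P[T] - \E_Q[T] = I_W(T; P, Q).
\end{align*}
Finally, since $\gR = \{\mathds{1}\}$ contains exactly one element, the infimum over $\gR$ is attained trivially, giving $\inf_{r \in \{\mathds{1}\}} \ltrpq = \ell_f(T, \mathds{1}; P, Q) = I_W(T;P,Q)$, as desired.

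There is essentially no obstacle here; the only thing to be careful about is explicitly invoking the convention $f(1) = 0$ imposed on the convex generator of the $f$-divergence (stated just before Lemma \ref{thm:nwj} in the preliminaries), which is what ensures the $\E_Q[f(r)]$ term vanishes and lets the two remaining terms line up with the WGAN/IPM critic objective $I_W$. Compared with Proposition \ref{thm:lr-f}, no Fenchel duality argument is needed because the feasible set has been collapsed to a single point.
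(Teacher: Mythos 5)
Your proof is correct and follows essentially the same route as the paper's: collapse the infimum over the singleton $\{\mathds{1}\}$, evaluate $\ell_f$ at $r = \mathds{1}$, and invoke the convention $f(1) = 0$ to eliminate the $\E_Q[f(r)]$ term. Your explicit remark that no Fenchel duality is needed is a nice clarification but does not change the argument.
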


\begin{proof} As $\{\mathds{1}\}$ has only one element, the infimum is:
\begin{align}
\ell_f(T, \mathds{1}; P, Q) &= \bb{E}_Q[f(1)] + \E_P[T] - \E_Q[T] \\
& = I_W(T; P, Q)
\end{align}
where we used $f(1) = 0$ for the second equality.
\end{proof}

The above propositions show that $\gL_f^\gR$ generalizes both $f$-GAN and WGANs critic objectives by setting $\gR = L^\infty_{\geq 0}(Q)$ and $\gR = \{\mathds{1}\}$ respectively.

\begin{figure}
    \centering
    \includegraphics[width=0.49\textwidth]{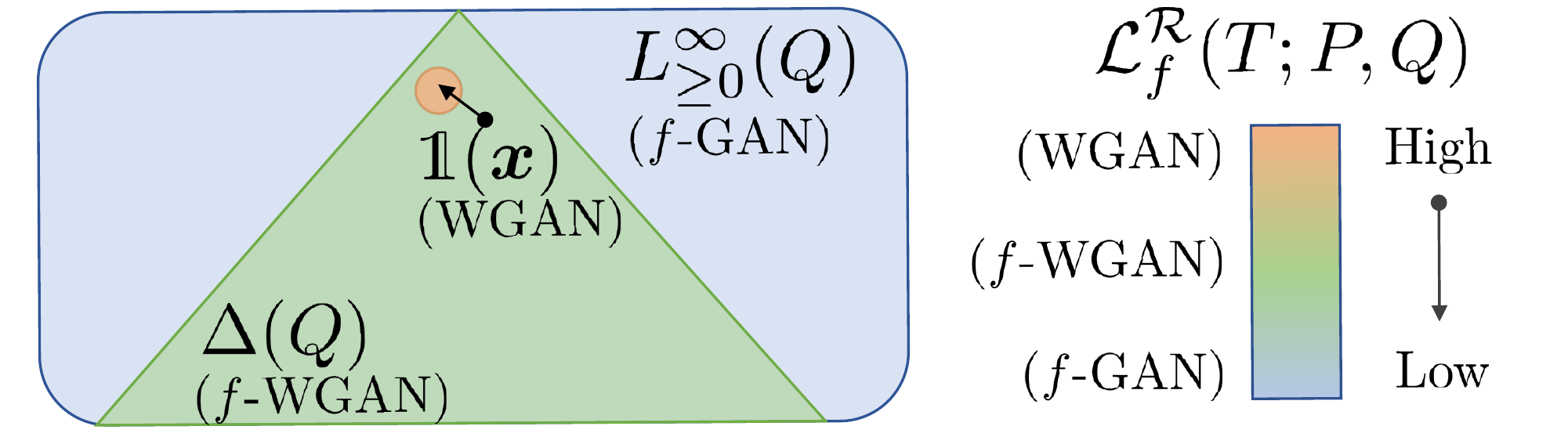}
    \caption{
    (Left) Minimization over different $\gR$ in $\gL_f^\gR$ gives different critic objectives. Minimizing over $L^\infty_{\geq 0}(Q)$ recovers $f$-GAN (blue set), minimizing over $\{\mathds{1}\}$ recovers WGAN (orange set), and minimizing over $\Delta(Q)$ recovers $f$-WGAN (green set). (Right) Naturally, as we consider smaller sets $\gR$ to minimize over, the critic objective becomes larger for the same $T$.}
    \label{fig:fwgan-relation}
\end{figure}

\subsection{Extensions to Alternative Constraints}

The generalization with $\gL_f^{\gR}$ allows us to introduce new objectives when we consider alternative choices for the constraint set $\gR$. We consider sets $\gR$ such that $\{\mathds{1}\} \subseteq \gR \subseteq L^\infty_{\geq 0}(Q)$. The following proposition shows for some fixed $T$, the corresponding objective with $\gR$ is bounded between the $f$-GAN objective (where $\gR = L^\infty_{\geq 0}(Q)$) and the WGAN objective (where $\gR = \{\mathds{1}\}$).
\begin{restatable}{proposition}{lfrbounded}
\label{thm:ub-f}
$\forall P, Q \in \gP(\gX)$ such that $P \ll Q$, $\forall T \in L^\infty(Q)$ such that $\mathrm{im}(T) \subseteq \mathrm{dom}((f')^{-1})$, and $\forall \gR \subseteq L_{\geq 0}^\infty(Q)$ such that $\{\mathds{1}\} \subseteq \gR$ we have:
\begin{align}
   \iftpq  \leq \LRtpq \leq \iwtrpq.
\end{align}
\end{restatable}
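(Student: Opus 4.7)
The plan is to observe that Proposition~\ref{thm:ub-f} is essentially a monotonicity statement about infima over nested sets, and to reduce it directly to Propositions~\ref{thm:lr-f} and \ref{thm:lr-w}. Concretely, since $\LRtpq$ is defined as an infimum of the functional $r \mapsto \ltrpq$ over $\gR$, shrinking $\gR$ can only increase the infimum, and plugging in any specific admissible $r$ can only give an upper bound on it.

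For the lower bound, I would argue as follows. Because $\gR \subseteq L^\infty_{\geq 0}(Q)$, taking the infimum over a subset cannot be smaller than taking it over the superset, so
\begin{align*}
\inf_{r \in L^\infty_{\geq 0}(Q)} \ltrpq \ \leq\ \inf_{r \in \gR} \ltrpq \ =\ \LRtpq.
\end{align*}
By Proposition~\ref{thm:lr-f}, the left-hand side equals $\iftpq$ (this is where the hypothesis $\mathrm{im}(T) \subseteq \mathrm{dom}((f')^{-1})$ is used, since it ensures the optimizer $r = (f')^{-1}(T)$ in fact lies in $L^\infty_{\geq 0}(Q)$ and achieves equality in Fenchel's inequality). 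Hence $\iftpq \leq \LRtpq$.

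For the upper bound, I use that $\mathds{1} \in \gR$, so the infimum is bounded above by the value of $\ell_f$ at $r = \mathds{1}$:
\begin{align*}
\LRtpq \ =\ \inf_{r \in \gR} \ltrpq \ \leq\ \ell_f(T, \mathds{1}; P, Q).
\end{align*}
By Proposition~\ref{thm:lr-w} (which uses $f(1)=0$), the right-hand side equals $\iwtrpq$, yielding $\LRtpq \leq \iwtrpq$.

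There is essentially no obstacle here: the result is purely a monotonicity consequence of the two preceding propositions, and the only subtle point is tracking that the hypotheses of Proposition~\ref{thm:lr-f} (in particular the image condition on $T$) are what license the identification of the unconstrained infimum with $\iftpq$. I would state the two inequalities in sequence and cite Propositions~\ref{thm:lr-f} and \ref{thm:lr-w} at the respective steps; no further calculation is needed.
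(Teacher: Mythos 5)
Your proof is correct and follows essentially the same route as the paper's: both bounds are obtained by the monotonicity of the infimum over nested sets, invoking Proposition~\ref{thm:lr-f} (with the image condition on $T$) to identify the infimum over $L^\infty_{\geq 0}(Q)$ with $\iftpq$, and Proposition~\ref{thm:lr-w} together with $\mathds{1} \in \gR$ to bound the infimum above by $\iwtrpq$. No gaps; your write-up is if anything slightly more careful than the paper's in flagging where the hypothesis on $\mathrm{im}(T)$ is used.
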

\begin{proof}
In Appendix~\ref{app:proofs}.
\end{proof}
We visualize this in Figure~\ref{fig:fwgan-relation}. Selecting the set $\gR$ allows us to  control the critic objective in a more flexible manner, interpolating between the $f$-GAN critic and the IPM critic objective and finding suitable trade-offs.
Moreover, if we additionally take the supremum of $\LRtpq$ over $T$, the result will be bounded between the supremum of $I_f$ over $T$ (corresponding to the $f$-divergence) and the supremum of $I_W$ over $T$, as stated in the following theorem.
\begin{restatable}{theorem}{div}
\label{thm:div}
For $\{\mathds{1}\} \subseteq \gR \subseteq L_{\geq 0}^\infty(Q)$, define
\begin{align}
    D_{f, \gR}(P \Vert Q) := \sup_{T \in \gF} \LRtpq
\end{align}
where $\gF := \{T: \gX \to \mathrm{dom}((f')^{-1}), T \in L^\infty(Q)\}$. Then
\begin{align}
    D_{f}(P \Vert Q) \leq D_{f, \gR}(P \Vert Q) \leq \sup_{T \in \gF} I_W(T; P, Q).
\end{align}
\end{restatable}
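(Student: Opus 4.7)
The plan is to derive Theorem \ref{thm:div} as a direct consequence of Proposition \ref{thm:ub-f}, by applying the pointwise-in-$T$ bound and then taking the supremum over $T \in \gF$ on every term. Concretely, Proposition \ref{thm:ub-f} tells us that for any $T \in \gF$ and any $\gR$ sandwiched between $\{\mathds{1}\}$ and $L^\infty_{\geq 0}(Q)$,
\begin{align}
\iftpq \;\leq\; \LRtpq \;\leq\; \iwtrpq,
\end{align}
so I would simply take $\sup_{T \in \gF}$ of each expression and use the monotonicity of the supremum.

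The middle term then becomes $D_{f,\gR}(P\Vert Q)$ by definition, and the right term becomes $\sup_{T \in \gF} I_W(T;P,Q)$ verbatim, matching the desired right-hand side. The one step that requires a small argument is identifying the left supremum with $D_f(P\Vert Q)$: Lemma \ref{thm:nwj} gives $D_f(P\Vert Q) = \sup_{T \in L^\infty(Q)} \iftpq$, whereas here the supremum is restricted to the subset $\gF \subseteq L^\infty(Q)$ where additionally $\mathrm{im}(T) \subseteq \mathrm{dom}((f')^{-1})$. I would argue that the restriction does not lose anything because the maximizer identified in Lemma \ref{thm:nwj}, namely $T^{\star} = f'(\diff P / \diff Q)$, already satisfies $\mathrm{im}(T^{\star}) \subseteq \mathrm{im}(f') = \mathrm{dom}((f')^{-1})$, since $\diff P / \diff Q \geq 0$ lies in $\mathrm{dom}(f') = [0,\infty)$. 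Hence
\begin{align}
\sup_{T \in \gF} \iftpq \;=\; \sup_{T \in L^\infty(Q)} \iftpq \;=\; D_f(P \Vert Q),
\end{align}
and the chain of inequalities closes as claimed.

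The only mild obstacle is making sure this restriction-to-$\gF$ argument is airtight, in particular when $f'$ is not surjective onto all of $\mathbb{R}$ (so $\mathrm{dom}((f')^{-1})$ is a strict subset of $\mathbb{R}$). In that case one might worry that truncating $T$ to $\gF$ could strictly lower the supremum. The remedy is that the argument above only needs the optimizer $T^{\star}$ of the unrestricted problem to lie in $\gF$, which it does by construction, so the restricted supremum still achieves $D_f(P\Vert Q)$. After verifying this, the full statement follows in two lines from Proposition \ref{thm:ub-f} with no additional work.
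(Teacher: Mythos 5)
Your proposal is correct and follows essentially the same route as the paper: the paper likewise sandwiches $\inf_{r\in\gR}\ltrpq$ between the infima over $L^\infty_{\geq 0}(Q)$ and $\{\mathds{1}\}$ (i.e., Proposition~\ref{thm:ub-f}) and then takes the supremum over $T\in\gF$ on each term. Your extra step verifying that restricting the supremum to $\gF$ still attains $D_f(P\Vert Q)$ --- because the maximizer $T^\star = f'(\diff P/\diff Q)$ has image in $\mathrm{dom}((f')^{-1})$ --- is a detail the paper's proof leaves implicit, and it is a worthwhile addition.
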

\begin{proof}
In Appendix~\ref{app:proofs}. %
\end{proof}

A natural corollary is that $D_{f, \gR}$ defines a divergence between two distributions.
\begin{corollary}
\label{thm:divergence}
    $D_{f, \gR}(P \Vert Q)$ defines a divergence between $P$ and $Q$: $D_{f, \gR}(P \Vert Q) \geq 0$ for all $P, Q \in \gP(\gX)$, and $D_{f, \gR}(P \Vert Q) = 0$ if and only if $P = Q$.
\end{corollary}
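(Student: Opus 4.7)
The plan is to derive the corollary directly from Theorem~\ref{thm:div}, which sandwiches $D_{f,\gR}(P\Vert Q)$ between the $f$-divergence $D_f(P\Vert Q)$ and the IPM-type quantity $\sup_{T \in \gF} I_W(T;P,Q)$. Both bounds are already known divergences (in the weak sense), so the two defining properties of a divergence transfer to $D_{f,\gR}$ by squeezing from the appropriate side.

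First I would establish non-negativity. From Theorem~\ref{thm:div} we have $D_f(P\Vert Q) \leq D_{f,\gR}(P\Vert Q)$, and the classical $f$-divergence satisfies $D_f(P\Vert Q) \geq 0$ by Jensen's inequality applied to the convex function $f$ with $f(1) = 0$. Hence $D_{f,\gR}(P\Vert Q) \geq 0$ for all $P, Q \in \gP(\gX)$.

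Next I would handle the ``only if'' direction, i.e., $P = Q \Rightarrow D_{f,\gR}(P\Vert Q) = 0$. Using the upper bound in Theorem~\ref{thm:div}, for $P = Q$ we get
\begin{align}
    D_{f,\gR}(P\Vert P) \;\leq\; \sup_{T \in \gF} I_W(T; P, P) \;=\; \sup_{T \in \gF}\bigl(\bb{E}_P[T] - \bb{E}_P[T]\bigr) \;=\; 0.
\end{align}
Combined with non-negativity from the previous step, this yields $D_{f,\gR}(P\Vert P) = 0$.

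Finally I would prove the ``if'' direction, i.e., $D_{f,\gR}(P\Vert Q) = 0 \Rightarrow P = Q$. Using the lower bound in Theorem~\ref{thm:div} again, $0 = D_{f,\gR}(P\Vert Q) \geq D_f(P\Vert Q) \geq 0$, so $D_f(P\Vert Q) = 0$. Under the standard strict convexity of $f$ at $1$ (implicit in treating $D_f$ as an $f$-divergence), equality in Jensen's inequality forces $\diff P/\diff Q = 1$ $Q$-almost everywhere, hence $P = Q$. The main obstacle here is really just making sure the generator $f$ is strictly convex at $1$; this is already a standing assumption for $f$-divergences in Section~2.1, so no additional hypothesis is needed, and the three steps combine to give the corollary.
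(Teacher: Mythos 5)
Your proof is correct and takes essentially the same route as the paper, which likewise obtains the corollary by sandwiching $D_{f,\gR}$ between $D_f$ and the IPM via Theorem~\ref{thm:div} (non-negativity from the lower bound, vanishing at $P=Q$ from the upper bound, and identity of indiscernibles from the lower bound). One small correction: Section~2.1 only assumes $f$ is convex, semi-continuous, and $f(1)=0$ --- strict convexity at $1$ is \emph{not} literally a standing assumption there, so your final step does require it as an additional (standard) hypothesis, a point the paper itself leaves implicit.
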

This allows us to interpret the corresponding GAN algorithm as variational minimization of a certain divergence bounded between the corresponding $f$-divergence and IPM.

\section{Practical $f$-Wasserstein GANs}

As a concrete example, we consider the set $\gR = \Delta(Q)$, which is the set of all valid density ratios over $Q$. We note that $\{\mathds{1}\} \subset \Delta(Q) \subset L^\infty_{\geq 0}(Q)$ (see Figure~\ref{fig:fwgan-relation}), so the corresponding objective is a divergence (from Corollary~\ref{thm:divergence}).
We can then consider the variational divergence minimization objective over $\gL_f^{\Delta(Q)}(T; P, Q)$:
\begin{align}
   \inf_{Q \in \gP(\gX)} \sup_{T \in \gF} \inf_{r \in \Delta(Q)} \ltrpq, \label{eq:fwgan}
\end{align}
We name this the ``$f$-Wasserstein GAN'' ($f$-WGAN) objective, since it provides an interpolation between $f$-GAN and Wasserstein GANs while recovering a density ratio estimate between two distributions.

\subsection{KL-Wasserstein GANs}
For the $f$-WGAN objective in \eqref{eq:fwgan}, the trivial algorithm would have to perform iterative updates to three quantities $Q$, $T$ and $r$, which involves three 
nested optimizations. While this seems impractical,
we show that for certain choices of $f$-divergences, we can obtain closed-form solutions for the optimal $r \in \Delta(Q)$ in the innermost minimization; this bypasses the need to perform an inner-loop optimization over $r \in \Delta(Q)$, as we can simply assign the optimal solution from the close-form expression.

\begin{restatable}{theorem}{klgan}\label{thm:kl-gan}
Let $f(u) = u \log u$ and $\gF$ a set of real-valued bounded measurable functions on $\gX$. For any fixed choice of $P, Q$, and $T \in \gF$, we have
\begin{gather}
    \argmin_{r \in \Delta(Q)} \E_{Q}[f(r)] + \bb{E}_P[T] - \bb{E}_{Q}[r \cdot T] 
    = \frac{e^T}{\bb{E}_Q[e^T]} \label{eq:kl-gan}
\end{gather}
\end{restatable}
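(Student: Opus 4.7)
The plan is to reduce the claim to the Gibbs variational principle (equivalently, non-negativity of KL divergence between two probability measures, both absolutely continuous with respect to $Q$). First, I would drop the term $\E_P[T]$, which is constant in $r$, so that it suffices to minimize
\begin{align}
    F(r) := \E_Q[r \log r] - \E_Q[r \cdot T]
\end{align}
over $r \in \Delta(Q)$. Next, I would verify that the candidate $r^\star := e^T / \E_Q[e^T]$ actually lies in $\Delta(Q)$: since $\gF$ consists of bounded measurable functions, $e^T$ is bounded above and bounded away from $0$, so $r^\star$ is strictly positive, essentially bounded, and satisfies $\E_Q[r^\star] = 1$.

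The core algebraic step is to rewrite $F(r)$ as a KL-type functional plus a constant. For arbitrary $r \in \Delta(Q)$, using $\log r = \log(r/r^\star) + \log r^\star$ and $\log r^\star = T - \log \E_Q[e^T]$, I would compute
\begin{align}
    F(r) &= \E_Q\!\left[r \log \frac{r}{r^\star}\right] + \E_Q[r \log r^\star] - \E_Q[r \cdot T] \\
         &= \E_Q\!\left[r \log \frac{r}{r^\star}\right] - \log \E_Q[e^T],
\end{align}
where the second line uses $\E_Q[r] = 1$. Since $r$ and $r^\star$ are both Radon--Nikodym densities (with respect to $Q$) of probability measures on $\gX$, the first term is exactly $\KL(r\, Q \,\Vert\, r^\star Q) \geq 0$, with equality iff $r = r^\star$ almost surely under $Q$. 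The second term does not depend on $r$. Hence $F$ is minimized precisely at $r = r^\star$, which is the desired conclusion.

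The main obstacle is largely bookkeeping rather than substance: making sure that the rewriting is valid (e.g., handling the convention $0 \log 0 = 0$ if $r$ vanishes on a set of positive $Q$-measure, and invoking the boundedness of $T \in \gF$ to guarantee integrability of $r \log r^\star$ and positivity of $r^\star$). One could alternatively derive $r^\star$ via a formal Lagrangian argument — setting the functional derivative $\log r + 1 - T + \lambda$ to zero and solving for $\lambda$ via $\E_Q[r]=1$ — but the KL-based argument has the advantage of simultaneously proving both optimality and uniqueness, and it cleanly discharges the non-negativity constraint since $r^\star > 0$ automatically.
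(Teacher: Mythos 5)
Your proof is correct, but it takes a genuinely different route from the paper. The paper argues via a Lagrangian for the equality constraint $\E_Q[r]=1$: it sets the functional derivative $\log r + 1 - T + \lambda$ to zero, solves $r=\exp(T-(\lambda+1))$, and fixes $\lambda$ by normalization. That is exactly the ``formal Lagrangian argument'' you mention as an alternative at the end. Your main argument instead performs the exact decomposition
\begin{align}
\E_Q[r\log r] - \E_Q[r\cdot T] \;=\; \KL\bigl(rQ \,\Vert\, r^\star Q\bigr) - \log \E_Q\bigl[e^{T}\bigr], \qquad r^\star := e^{T}/\E_Q[e^{T}],
\end{align}
and concludes from non-negativity of the KL divergence. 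The trade-off is as you describe: the paper's stationarity computation is shorter and generalizes mechanically to other $f$ (it is reused verbatim for the $\chi^2$ case in Appendix~\ref{app:chi-square}), but as written it only identifies a critical point --- it does not verify global optimality (one would need to invoke convexity of $u\mapsto u\log u$), does not address the implicit constraint $r\geq 0$, and does not establish uniqueness. Your Gibbs-variational-principle argument gets global optimality and $Q$-a.s.\ uniqueness for free, and the non-negativity constraint is automatically inactive since $r^\star>0$; the price is that it exploits the specific form $f(u)=u\log u$ and would not transfer as directly to, say, the $\chi^2$ case. Your bookkeeping points (boundedness of $T\in\gF$ giving $r^\star\in\Delta(Q)$ and integrability, and the $0\log 0=0$ convention) are exactly the right things to check and are handled correctly.
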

\begin{proof}
In Appendix~\ref{app:proofs}.
\end{proof}
The above theorem shows that if the $f$-divergence of interest is the KL divergence, we can directly obtain the optimal $r \in \Delta(Q)$ using \eqref{eq:kl-gan} for any fixed critic $T$.
Then, we can apply this $r$ to the $f$-WGAN objective, and perform gradient descent updates on $Q$ and $T$ only. %
Avoiding the optimization procedure over $r$ allows us to propose practical algorithms that are similar to existing WGAN procedures.
In Appendix~\ref{app:chi-square}, we show a similar argument with $\chi^2$-divergence, another $f$-divergence admitting a closed-form solution, and discuss its connections with the $\chi^2$-GAN approach~\citep{tao2018chi}.

\subsection{Implementation Details}

In Algorithm~\ref{alg:kl-wgan}, we describe KL-Wasserstein GAN (KL-WGAN), a practical algorithm motivated by the $f$-WGAN objectives based on the observations in Theorem~\ref{thm:kl-gan}. 
We note that $r_0$ corresponds to selecting the optimal value for $r$ from Theorem~\ref{thm:kl-gan}; once $r_0$ is selected, we ignore the effect of $E_Q[f(r_0)]$ to the objective and optimize the networks with the remaining terms, which corresponds to weighting the generated samples with $r_0$; the critic will be updated as if the generated samples are reweighted. In particular, $\nabla_\phi(D_0 - D_1)$ corresponds to the critic gradient ($T$, which is parameterized by $\phi$) and $\nabla_\theta D_1$ corresponds to the generator gradient ($Q$, parameterized by $\theta$). 

In terms of implementation, the only differences between KL-WGAN and WGAN are between lines \ref{alg:kl-gan-fake-start} and \ref{alg:kl-gan-fake-end}, where WGAN will assign $r_0(\vx) = 1$ for all $\vx \sim Q_m$. 
In contrast, KL-WGAN ``importance weights'' the samples using the critic, in the sense that it will assign higher weights to samples that have large $T_\phi(\vx)$ and lower weights to samples that have low $T_\phi(\vx)$. This will encourage the generator $Q_\theta(\vx)$ to put more emphasis on samples that have high critic scores. 
It is relatively easy to implement the KL-WGAN algorithm from an existing WGAN implementation, as we only need to modify the loss function. We present an implementation of KL-WGAN losses (in PyTorch) in Appendix~\ref{app:klwgan-impl}.

\begin{algorithm}[t]
\caption{Pseudo-code for KL-Wasserstein GAN}
\label{alg:kl-wgan}
\begin{algorithmic}[1]
\STATE \textbf{Input:} the (empirical) data distribution $\pdata$; %
\STATE \textbf{Output}: implicit generative model $Q_\theta$.

\STATE Initialize generator $Q_\theta$ and discriminator $T_\phi$.
\REPEAT 
\STATE Draw $P_m :=$ $m$ \iid samples from $\pdata$;
\STATE Draw $Q_m :=$ $m$ \iid samples from $Q_\theta(\vx)$.
\STATE Compute $D_1 := \E_{P_m}[T_\phi(\vx)]$ (real samples)
\FORALL{$\vx \in Q_m$  (fake samples)} \label{alg:kl-gan-fake-start}
\STATE Compute $r_0(\vx) := e^{T_\phi(\vx)} / \E_{Q_m}[e^{T_\phi(\vx)}]$ \label{alg:kl-gan-biased}
\ENDFOR
\STATE Compute $D_0 := \E_{Q_m}[r_0(\vx) T_\phi(\vx)]$. \label{alg:kl-gan-fake-end}
\STATE Perform SGD over $\theta$ with $-\nabla_\theta D_0$;
\STATE Perform SGD over $\phi$ with $\nabla_\phi (D_0 - D_1)$.
\STATE Regularize $T_\phi$ to satisfy $k$-Lipschitzness.
\UNTIL{Stopping criterion}
\RETURN{learned implicit generative model $Q_\theta$.}

\end{algorithmic}
\end{algorithm}

While the mini-batch estimation for $r_0(\vx)$ provides a biased estimate to the optimal $r \in \Delta(Q)$ (which according to Theorem \ref{thm:kl-gan} is $e^{T_\theta(\vx)} / \E_Q[e^{T_\theta(\vx)}]$, i.e., normalized with respect to $Q$ instead of over a minibatch of $m$ samples as done in line \ref{alg:kl-gan-fake-start}), we found that this does not affect performance significantly. %
We further note that computing $r_0(\vx)$ does not require additional network evaluations, so the computational cost for each iteration is nearly identical between WGAN and KL-WGAN. To promote reproducible research, we include code in the supplementary material.

\section{Related Work}

\subsection{$f$-divergences, IPMs and GANs} 
Variational $f$-divergence minimization and IPM minimization paradigms are widely adopted in GANs. A non-exhaustive list includes $f$-GAN~\citep{nowozin2016f}, Wasserstein GAN~\citep{arjovsky2017wasserstein}, MMD-GAN~\citep{li2017mmd}, WGAN-GP~\citep{gulrajani2017improved}, SNGAN~\citep{miyato2018spectral}, LSGAN~\citep{mao2017least}, etc. The $f$-divergence paradigms enjoy better interpretations over the role of learned discriminator (in terms of density ratio estimation), whereas IPM-based paradigms enjoy better training stability and empirical performance. 
Prior work have connected IPMs with $\chi^2$ divergences between mixtures of data and model distributions~\citep{mao2017least,tao2018chi,mroueh2017fisher}; our approach can be applied to $\chi^2$ divergences as well, and we discuss its connections with $\chi^2$-GAN in Appendix~\ref{app:chi-square}.

Several works~\citep{liu2017approximation,farnia2018a} considered restricting function classes directly over the $f$-GAN objective; \citet{husain2019adversarial} show that restricted $f$-GAN objectives are lower bounds to Wasserstein autoencoder~\citep{tolstikhin2017wasserstein} objectives, aligning with our argument for $f$-GAN and WGAN (Figure~\ref{fig:fwgan-relation}).

Our approach is most related to regularized variational $f$-divergence estimators~\citep{nguyen2010estimating,ruderman2012tighter} and linear $f$-GANs~\citep{liu2017approximation,liu2018the} where the function family $\gF$ is a RKHS with fixed ``feature maps''. Different from these approaches, ours 
naturally allows the ``feature maps'' to be learned. Moreover, considering both restrictions allows us to bypass inner-loop optimization via closed-form solutions in certain cases (such as KL or $\chi^2$ divergences); this leads to our KL-WGAN approach which is easy to implement from existing WGAN implementations, and also have similar computational cost per iteration.

\subsection{
Reweighting of Generated Samples} 
The learned discriminators in GANs can further be used to perform reweighting over the generated samples~\citep{tao2018chi}; these include rejection sampling~\citep{azadi2018discriminator}, importance sampling~\citep{grover2019bias,tao2018chi}, and Markov chain monte carlo~\citep{turner2018metropolis}. These approaches can only be performed after training has finished, unlike our KL-WGAN case where discriminator-based reweighting are performed during training. 

Moreover, prior reweighting approaches assume that the discriminator learns to approximate some (fixed) function of the density ratio $\diff \pdata / \diff Q_\theta$, which does not apply directly to general IPM-based GAN objectives (such as WGAN); in KL-WGAN, we interpret the discriminator outputs as (un-normalized, regularized) log density ratios, introducing the density ratio interpretation to the IPM paradigm. We note that post-training discriminator-based reweighting can also be applied to our approach, and is orthogonal to our contributions; we leave this as future work.

\begin{table*}[th]
\centering
\caption{Negative Log-likelihood (NLL) and Maximum mean discrepancy (MMD, multiplied by $10^3$) results on six 2-d synthetic datasets. Lower is better. W denotes the original WGAN objective, and KL-W denotes the proposed KL-WGAN objective.}
\vskip 0.15in
\label{tab:synthetic}
\begin{tabular}{c|c|cccccc}
\toprule
Metric & GAN & MoG & Banana & Rings & Square & Cosine & Funnel \\\midrule
 \multirow{2}{*}{NLL}  &  W  & $2.65 \pm 0.00$  & $3.61 \pm 0.02$  & $\textbf{4.25} \pm 0.01$  & $3.73 \pm 0.01$  & $\textbf{3.98} \pm 0.00$  & $3.60 \pm 0.01$ \\
& KL-W & $\textbf{2.54} \pm 0.00$  & $\textbf{3.57} \pm 0.00$  & $\textbf{4.25} \pm 0.00$  & $\textbf{3.72} \pm 0.00$  & $4.00 \pm 0.01$  & $\textbf{3.57} \pm 0.00$ \\\midrule
\multirow{2}{*}{MMD} & W & $25.45 \pm 7.78$  & $3.33 \pm 0.59$  & $2.05 \pm 0.47$  & $2.42 \pm 0.24$  & $\textbf{1.24} \pm 0.40$  & $1.71 \pm 0.65$ \\
& KL-W & $\textbf{6.51} \pm 3.16$  & $\textbf{1.45} \pm 0.12$  & $\textbf{1.20} \pm 0.10$  & $\textbf{1.10} \pm 0.23$  & $1.33 \pm 0.23$  & $\textbf{1.08} \pm 0.23$ \\\bottomrule
\end{tabular}
\end{table*}

\begin{figure*}[th]
    \centering
\begin{subfigure}{0.7\textwidth}
    \centering
    \includegraphics[width=\textwidth]{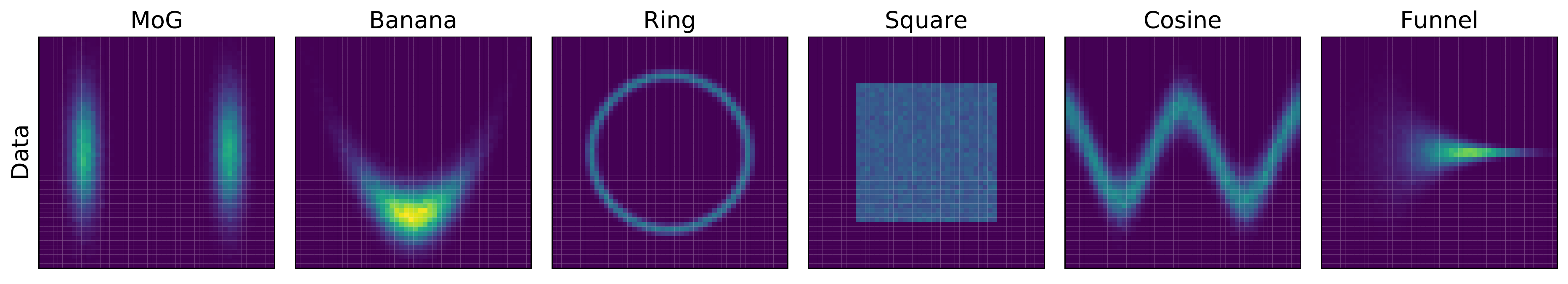}
\end{subfigure}
\begin{subfigure}{0.7\textwidth}
    \centering
    \includegraphics[width=\textwidth]{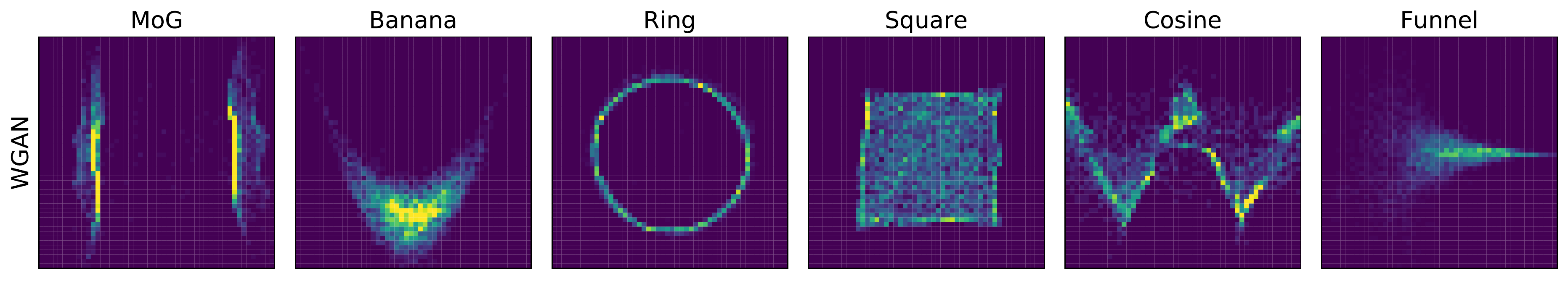}
\end{subfigure}
\begin{subfigure}{0.7\textwidth}
    \centering
    \includegraphics[width=\textwidth]{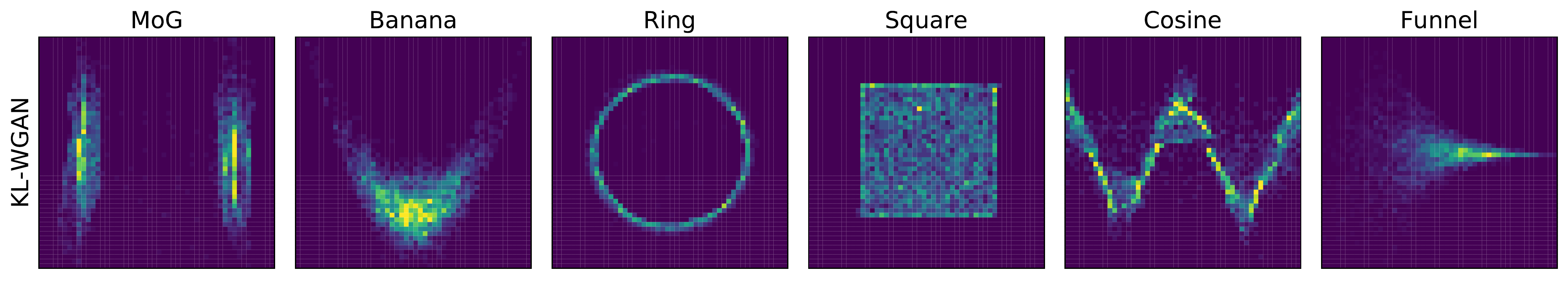}
\end{subfigure}
    \caption{Histograms of samples from the data distribution (top), WGAN (middle) and our KL-WGAN (bottom).}
    \label{fig:hist2d}
\end{figure*}

\begin{figure}
    \centering
\begin{subfigure}{0.43\textwidth}
    \centering
    \includegraphics[width=\textwidth]{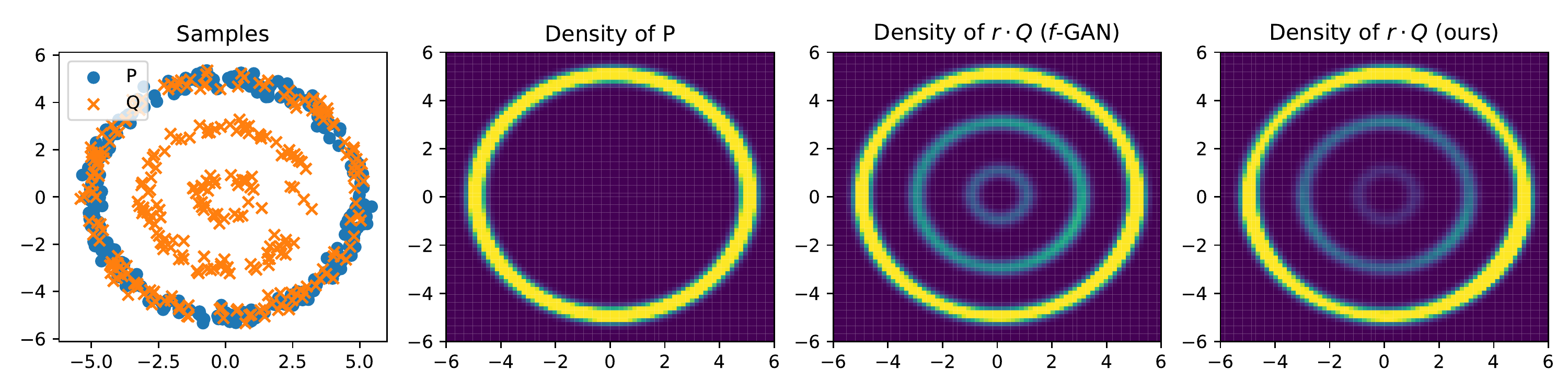}
\end{subfigure}
\begin{subfigure}{0.43\textwidth}
    \centering
    \includegraphics[width=\textwidth]{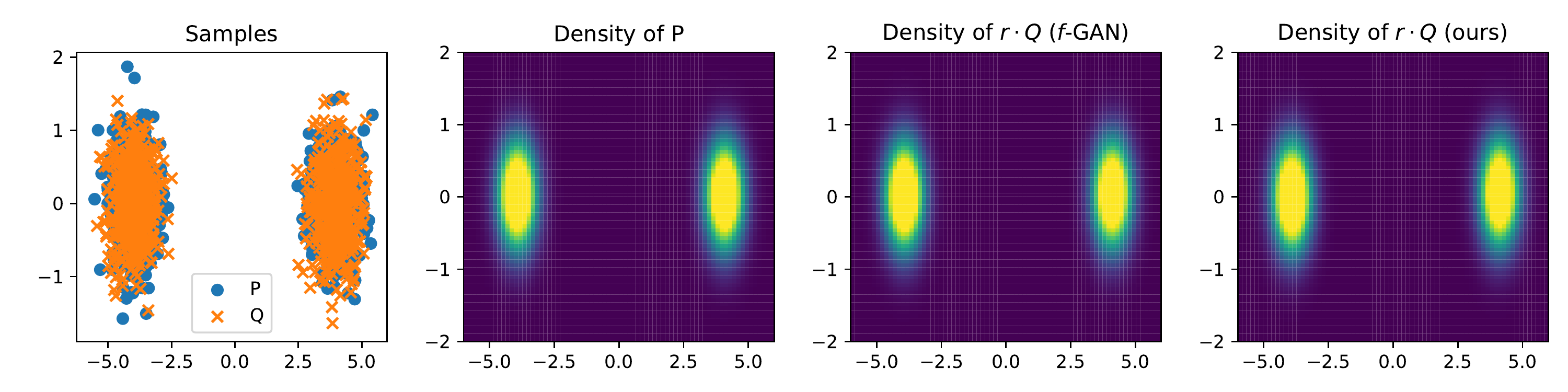}
\end{subfigure}
\begin{subfigure}{0.43\textwidth}
    \centering
    \includegraphics[width=\textwidth]{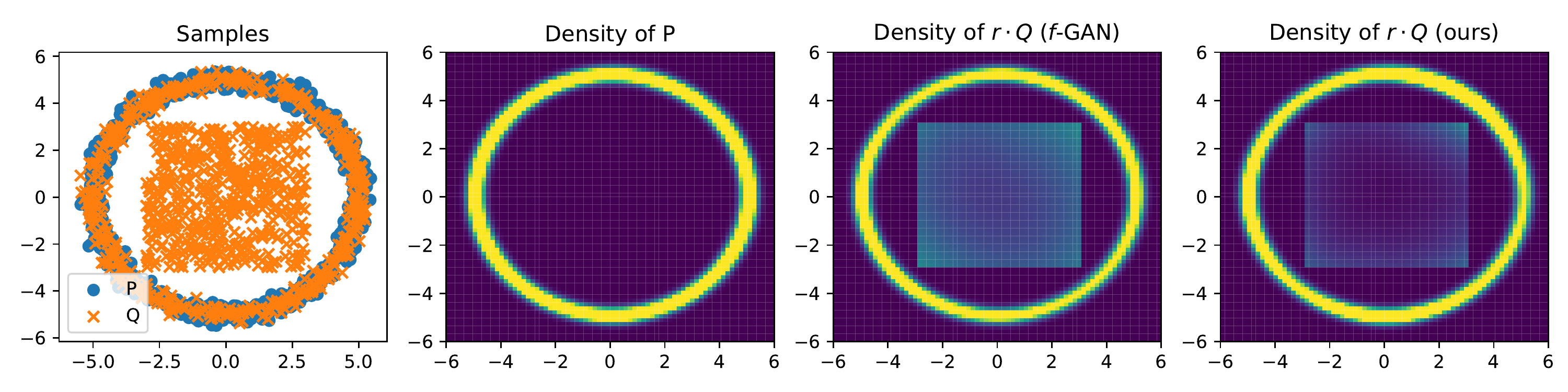}
\end{subfigure}
\begin{subfigure}{0.43\textwidth}
    \centering
    \includegraphics[width=\textwidth]{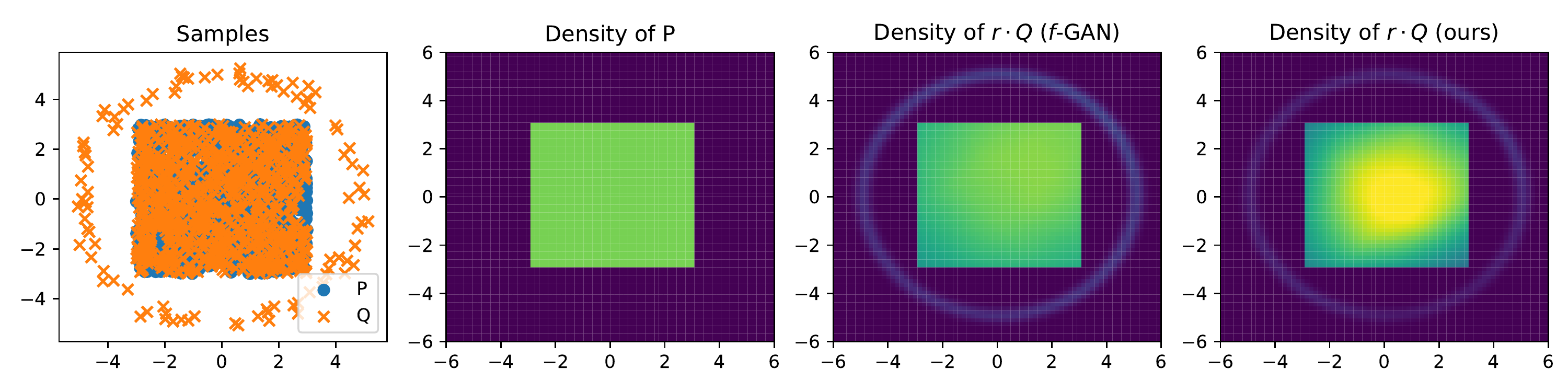}
\end{subfigure}
    \caption{Estimating density ratios. The first column contains the samples used for training, the second column is the ground truth density of $P$, the third and fourth  columns are the density of $Q$ times the estimated density ratios from original $f$-GAN (third column) and our KL-WGAN (fourth column).}
    \label{fig:density-ratio}
\end{figure}

\begin{table}[t]
\centering
\caption{Negative Log-likelihood (NLL, top two rows) and Maximum mean discrepancy (MMD, multiplied by $10^3$, bottom two rows) results on real-world datasets. Lower is better for both evaluation metrics. W denotes the original WGAN objective, and KL denotes the proposed KL-WGAN objective.}
\vskip 0.15in
\label{tab:real}
\begin{tabular}{c|ccc}
\toprule
 & RedWine & WhiteWine & Parkinsons \\\midrule
 W  & $14.55 \pm 0.04$  & $14.12 \pm 0.02$  & $20.24 \pm 0.08$   \\
 KL & $\textbf{14.41} \pm 0.03$  & $\textbf{14.08} \pm 0.02$  & $\textbf{20.16} \pm 0.05$ \\\midrule
 W & $2.61 \pm 0.37$  & $1.32 \pm 0.10$  & $1.30 \pm 0.09$  \\
KL & $\textbf{2.55} \pm 0.11$  & $\textbf{1.23} \pm 0.17$  & $\textbf{0.84} \pm 0.04$ \\\bottomrule
\end{tabular}
\end{table}

\begin{figure*}[th]
    \centering
    \includegraphics[width=0.85\textwidth]{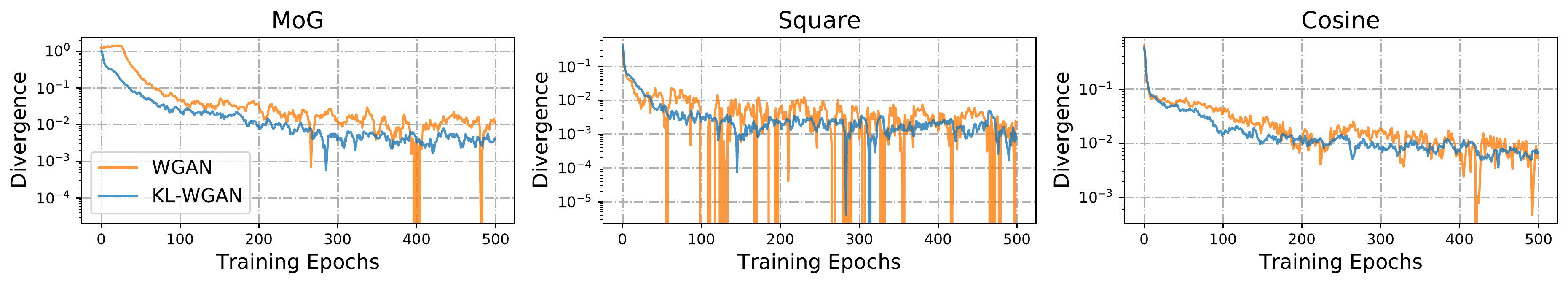}
    \caption{Estimated divergence with respect to training epochs (smoothed with a window of 10).}
    \label{fig:div_curve}
\end{figure*}

\section{Experiments}

We release code for our experiments (implemented in PyTorch) in \href{{https://github.com/ermongroup/f-wgan}}{https://github.com/ermongroup/f-wgan}.

\subsection{Synthetic and UCI Benchmark Datasets}

We first demonstrate the effectiveness of KL-WGAN on synthetic and UCI benchmark datasets~\citep{asuncion2007uci} considered in~\citep{wenliang2018learning}. The 2-d synthetic datasets include Mixture of Gaussians (MoG), Banana, Ring, Square, Cosine and Funnel; these datasets cover different modalities and geometries. We use RedWine, WhiteWine and Parkinsons from the UCI datasets. We use the same SNGAN~\citep{miyato2018spectral} arhictetures for WGAN and KL-WGANs, which uses spectral normalization to enforce Lipschitzness (detailed in Appendix~\ref{app:exps}).

After training, we draw 5,000 samples from the generator and then evaluate two metrics over a fixed validation set. One is the negative log-likelihood (NLL) of the validation samples on a kernel density estimator fitted over the generated samples; the other is the maximum mean discrepancy (MMD,~\citet{borgwardt2006integrating}) between the generated samples and validation samples. To ensure a fair comparison, we use identical kernel bandwidths for all cases.

\paragraph{Distribution modeling} We report the mean and standard error for the NLL and MMD results in Tables~\ref{tab:synthetic} and~\ref{tab:real} (with 5 random seeds in each case) for the synthetic datasets and UCI datasets respectively.
The results demonstrate that our KL-WGAN approach  outperforms its WGAN counterpart on all but the Cosine dataset. 
From the histograms of samples in Figure~\ref{fig:hist2d}, we can visually observe where our KL-WGAN performs significantly better than WGAN. For example, WGAN fails to place enough probability mass in the center of the Gaussians in MoG and fails to learn a proper square in Square, unlike our KL-WGAN approaches.

\paragraph{Density ratio estimation} We demonstrate that adding the constraint $r \in \Delta(Q)$ leads to effective density ratio estimators. We consider measuring the density ratio from synthetic datasets, and compare them with the original $f$-GAN with KL divergence. We evaluate the density ratio estimation quality by multiplying $\diff Q$ with the estimated density ratios, and compare that with the density of $P$; ideally the two quantities should be identical. We demonstrate empirical results in Figure~\ref{fig:density-ratio}, where we plot the samples used for training, the ground truth density of $P$ and the two estimates given by two methods.
In terms of estimating density ratios, our proposed approach is comparable to the $f$-GAN one.

\paragraph{Stability of critic objectives} For the MoG, Square and Cosine datasets, we further show the estimated divergences over a batch of 256 samples in Figure~\ref{fig:div_curve}, where WGAN uses $I_W$ and KL-WGAN uses the proposed $\gL_f^{\Delta(Q)}$. While both estimated divergences decrease over the course of training, our KL-WGAN divergence is more stable on all three cases. 
In addition, we evaluate the number of occurrences when a negative estimate of the divergences was produced for an epoch (which contradicts the fact that divergences should be non-negative); over 500 batches, WGAN has 46, 181 and 55 occurrences on MoG, Square and Cosine respectively, while KL-WGAN only has 29, 100 and 7 occurrences. This suggests that the proposed objective is easier to estimate and optimize, and is more stable across different iterations. %

\begin{table}
    \centering
    \caption{Inception and FID scores for CIFAR10 image generation. We list comparisons with results reported by %
    WGAN-GP~\citep{gulrajani2017improved}, Fisher GAN~\citep{mroueh2017fisher}, $\chi^2$ GAN~\citep{tao2018chi}, MoLM~\citep{ravuri2018learning}, SNGAN~\citep{miyato2018spectral}, NCSN~\citep{song2019generative}, BigGAN~\citep{brock2018large} and Sphere GAN~\citep{park2019sphere}. 
    (*) denotes our experiments with the PyTorch BigGAN implementation. %
    }
    \label{tab:cifar10}
    \vskip 0.15in
    \begin{tabular}{l|cc}
    \toprule
        Method &  Inception score &  FID score \\\midrule
        \multicolumn{3}{l}{\textbf{CIFAR10 Unconditional}} \\\midrule
        WGAN-GP & $7.86 \pm .07$ & - \\
        Fisher GAN & $7.90 \pm .05$ & - \\
        MoLM & $7.90 \pm .10$ & $18.9$ \\
        SNGAN & $8.22 \pm .05$ & 21.7 \\
        Sphere GAN & $8.39 \pm .08$ & $\textbf{17.1}$ \\
        NCSN & $\textbf{8.91}$ & 25.32 \\\midrule
        BigGAN* & $8.60 \pm .10$ & $16.38$ \\
        KL-BigGAN* & $\textbf{8.66} \pm .09$ & $\textbf{15.23}$ \\
        \midrule
        \multicolumn{3}{l}{\textbf{CIFAR10 Conditional}} \\\midrule
        Fisher GAN & $8.16 \pm .12$ & - \\
        WGAN-GP & $8.42 \pm .10$ & - \\
        $\chi^2$-GAN & $8.44 \pm .10$ & - \\
        SNGAN & $8.60 \pm .08$ & $17.5$ \\
        BigGAN & $\textbf{9.22}$ & $\textbf{14.73}$ \\
        \midrule
        BigGAN* & $9.08 \pm .11$ & $9.51$ \\
        KL-BigGAN* & $\textbf{9.20} \pm .09$ & $\textbf{9.17}$ \\\bottomrule
    \end{tabular}
\end{table}

\begin{table}
    \centering
    \caption{FID scores for CelebA image generation. The mean and standard deviation are obtained from 4 instances trained with different random seeds.}
    \vskip 0.15in
    \begin{tabular}{l|c|c}
    \toprule
        Method & Image Size & FID score \\\midrule
         BigGAN & \multirow{2}{*}{$64 \times 64$}& $18.07 \pm 0.47$ \\
        KL-BigGAN & &  $\textbf{17.70} \pm 0.32$\\
        \bottomrule
    \end{tabular}
    
    \label{tab:celeba}
\end{table}

\subsection{Image Generation}

We further evaluate our KL-WGAN's practical on image generation tasks on CIFAR10 and CelebA datasets.  
Our experiments are based on the BigGAN~\citep{brock2018large} PyTorch implementation\footnote{\hyperref[https://github.com/ajbrock/BigGAN-PyTorch]{https://github.com/ajbrock/BigGAN-PyTorch}}. We use a smaller network than the one reported in~\citet{brock2018large} (implemented on TensorFlow), using the default architecture in the PyTorch implementation.

We compare training a BigGAN network with its original objective and training same network with our proposed KL-WGAN algorithm, where we add steps~\ref{alg:kl-gan-fake-start} to~\ref{alg:kl-gan-fake-end} in Algorithm~\ref{alg:kl-wgan}. In addition, we also experimented with the original $f$-GAN with KL divergence; this failed to train properly due to numerical issues where exponents of very large critic values gives infinity values in the objective.

We report two common benchmarks for image generation, Inception scores~\citep{salimans2016improved} and Fréchet Inception Distance (FID)~\citep{heusel2017gans}~\footnote{Based on \href{https://github.com/mseitzer/pytorch-fid}{https://github.com/mseitzer/pytorch-fid}} in Table~\ref{tab:cifar10} (CIFAR10) and Table~\ref{tab:celeba} (CelebA). 
We do not report inception score on CelebA since the real dataset only has a score of less than 3, so the score is not very indicative of generation performance~\citep{heusel2017gans}. We show generated samples from the model in Appendix~\ref{app:samples}.

Despite the strong performance of BigGAN, our method is able to consistently achieve superior inception scores and FID scores consistently on all the datasets and across different random seeds. This demonstrates that the KL-WGAN algorithm is practically useful, and can serve as a viable drop-in replacement for the existing WGAN objective even on state-of-the-art GAN models, such as BigGAN.

\section{Conclusions}
In this paper, we introduce a generalization of $f$-GANs and WGANs based on optimizing a (regularized) objective over importance weighted samples. This perspective allows us to recover both $f$-GANs and WGANs when different sets to optimize for the importance weights are considered.
In addition, we show that this generalization leads to alternative practical objectives for training GANs and demonstrate its effectiveness on several different applications, such as distribution modeling, density ratio estimation and image generation. The proposed method only requires a small change in the original training algorithm and is easy to implement in practice. %

In future work, we are interested in considering other constraints that could lead to alternative objectives and/or inequalities and their practical performances. It would also be interesting to investigate the KL-WGAN approaches on high-dimensional density ratio estimation tasks such as off-policy policy evaluation, inverse reinforcement learning and contrastive representation learning.

\section*{Acknowledgements}
The authors would like to thank Lantao Yu, Yang Song, Abhishek Sinha, Yilun Xu and Shengjia Zhao for helpful discussions about the idea, proofreading a draft, and details about the image generation experiments. This research was supported by AFOSR (FA9550-19-1-0024), NSF (\#1651565, \#1522054, \#1733686), ONR, and FLI.

\bibliography{bib}
\bibliographystyle{icml2020}

\appendix

\newpage
\onecolumn

\section{Proofs}
\label{app:proofs}

\lfrbounded*
\begin{proof}
From Propositions~\ref{thm:lr-f}, and that $\gR \subseteq L^\infty_{\geq 0}(Q)$, we have:
\begin{align}
 \iftpq  = \inf_{r \in L^\infty_{\geq 0}(Q)} \ltrpq \leq \inf_{r \in \gR} \ltrpq = \LRtpq.
\end{align}
From Proposition~\ref{thm:lr-w} and that $\{\mathds{1}\} \subseteq \gR$, we have:
\begin{align}
 \LRtpq  = \inf_{r \in \gR} \ltrpq \leq \inf_{r \in \mathds{1}} \ltrpq = \LRtpq \leq \iwtrpq.
\end{align}
Combining the two inequalities completes the proof.
\end{proof}

\div*
\begin{proof}
From Proposition~\ref{thm:lr-f}, we have the following upper bound for $D_{f, \gR}(P \Vert Q)$:
\begin{align}
  & \ \sup_{T \in \gF} \inf_{r \in \gR} \bb{E}_{P}[f(r)] + \bb{E}_P[T] - \bb{E}_Q[r \cdot T] \\
  \leq & \ \sup_{T \in \gF} \inf_{r \in \{\mathds{1}\}} \bb{E}_{P}[f(r)] + \bb{E}_P[T] - \bb{E}_Q[r \cdot T] \nonumber \\
  = & \ \sup_{T \in \gF} \bb{E}_P[T] - \bb{E}_Q[T] = \ipm_{\gF}(P, Q), \nonumber
\end{align}
We also have the following lower bound for $D_{f, \gR}(P \Vert Q)$:
\begin{align}
  & \ \sup_{T \in \gF} \inf_{r \in \gR} \bb{E}_{P}[f(r)] + \bb{E}_P[T] - \bb{E}_Q[r \cdot T] \\
  \geq & \ \sup_{T \in \gF} \inf_{r \in L_{\geq 0}^\infty(Q)} \bb{E}_{P}[f(r)] + \bb{E}_P[T] - \bb{E}_Q[r \cdot T] \nonumber \\
  = & \ \sup_{T \in \gF} \bb{E}_P[T] - \bb{E}_Q[f_{*}(T)] = D_f(P \Vert Q). \nonumber
\end{align}
Therefore, $D_{f, \gR}(P \Vert Q)$ is bounded between $D_f(P \Vert Q)$ and $\ipm_{\gF}(P, Q)$ and thus it is a valid divergence over $\gP(\gX)$.
\end{proof}

\klgan*
\begin{proof}
Consider the following Lagrangian:
\begin{align}
  h(r, \lambda) := \E_{Q}[f(r)] - \bb{E}_{Q}[r \cdot T] + \lambda (\E_{Q}[r] - 1)
\end{align}
where $\lambda \in \R$ and we formalize the constraint $r \in \Delta(r)$ with $\E_{Q}[r] - 1 = 0$. Taking the functional derivative $\partial h / \partial r$ and setting it to zero, we have:
\begin{align}
   & \ f'(r) \diff Q - T \diff Q + \lambda \\
   =& \ (\log r + 1) \diff Q - T \diff Q + \lambda = 0, \nonumber
\end{align}
so $r = \exp(T - (\lambda + 1))$. We can then apply the constraint $\E_{Q}[r] = 1$, where we solve $\lambda + 1 = \bb{E}_Q[e^{T}]$, and consequently the optimal $r = e^{T} / \bb{E}_Q[e^{T}] \in \Delta(Q)$.
\end{proof}

\section{Example KL-WGAN Implementation in PyTorch}
\label{app:klwgan-impl}
{\small
\begin{verbatim}
def get_kl_ratio(v):
    vn = torch.logsumexp(v.view(-1), dim=0) - torch.log(torch.tensor(v.size(0)).float())
    return torch.exp(v - vn)

def loss_kl_dis(dis_fake, dis_real, temp=1.0):
    """
    Critic loss for KL-WGAN.
    dis_fake, dis_real are the critic outputs for generated samples and real samples.
    temp is a hyperparameter that scales down the critic outputs.
    We use the hinge loss from BigGAN PyTorch implementation.
    """
    loss_real = torch.mean(F.relu(1. - dis_real))
    dis_fake_ratio = get_kl_ratio(dis_fake / temp)
    dis_fake = dis_fake * dis_fake_ratio
    loss_fake = torch.mean(F.relu(1. + dis_fake))
    return loss_real, loss_fake

def loss_kl_gen(dis_fake, temp=1.0):
    """
    Generator loss for KL-WGAN.
    dis_fake is the critic outputs for generated samples.
    temp is a hyperparameter that scales down the critic outputs.
    We use the hinge loss from BigGAN PyTorch implementation.
    """
    dis_fake_ratio = get_kl_ratio(dis_fake / temp)
    dis_fake = dis_fake * dis_fake_ratio
    loss = -torch.mean(dis_fake)
    return loss
\end{verbatim}}

\section{Argument about $\chi^2$-Divergences}
\label{app:chi-square}
We present a similar argument to Theorem~\ref{thm:kl-gan} to $\chi^2$-divergences, where $f(u) = (u - 1)^2$.

\begin{theorem}\label{thm:chi2-gan}
Let $f(u) = (u-1)^2$ and $\gF$ is a set of real-valued bounded measurable functions on $\gX$. For any fixed choice of $P, Q$, and $T \in \gF$ such that $T \geq 0, T - \bb{E}[T] + 2 \geq 0$, we have
\begin{gather}
    \argmin_{r \in \Delta(Q)} \E_{Q}[f(r)] + \bb{E}_P[T] - \bb{E}_{Q_r}[T] = \frac{T - \E_{Q}[T] + 2}{2} \nonumber
\end{gather}
\end{theorem}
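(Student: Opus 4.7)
The plan is to follow the same Lagrangian argument used in the proof of Theorem~\ref{thm:kl-gan}, now with $f(u) = (u-1)^2$ in place of $u \log u$. Since $\Delta(Q)$ is defined by the single equality constraint $\E_Q[r] = 1$ together with the pointwise inequality $r \geq 0$, I would first solve the problem using only the equality constraint and then verify that the solution automatically satisfies the pointwise non-negativity thanks to the hypothesis $T - \E_Q[T] + 2 \geq 0$.

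Concretely, form the Lagrangian
\begin{align*}
h(r, \lambda) := \E_Q[(r-1)^2] + \E_P[T] - \E_Q[r \cdot T] + \lambda (\E_Q[r] - 1),
\end{align*}
noting that $\E_{Q_r}[T] = \E_Q[r \cdot T]$. Taking the functional derivative $\partial h / \partial r$ and setting the integrand to zero yields $2(r - 1) - T + \lambda = 0$, i.e.\ $r = 1 + (T - \lambda)/2$. Imposing the constraint $\E_Q[r] = 1$ forces $\lambda = \E_Q[T]$, so the stationary point is
\begin{align*}
r^\star = \frac{T - \E_Q[T] + 2}{2}.
\end{align*}

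Next I would justify that $r^\star$ is the global minimizer over $\Delta(Q)$. The objective is strictly convex in $r$ (because $f(u)=(u-1)^2$ is strictly convex and the remaining terms are linear in $r$), and $\Delta(Q)$ is convex, so any KKT point is the unique minimizer. The hypothesis $T - \E_Q[T] + 2 \geq 0$ guarantees $r^\star \geq 0$ pointwise, which means the inequality constraint is inactive and therefore the corresponding Lagrange multiplier can be taken to be zero, validating the purely equality-constrained stationarity condition.

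The main subtlety (rather than a deep obstacle) is exactly this non-negativity verification: unlike in the KL case, where $r^\star = e^T/\E_Q[e^T]$ is positive automatically, here positivity has to be enforced by an explicit assumption on $T$. The role of the extra hypothesis $T \geq 0$ in the statement is presumably to ensure the weaker condition $T - \E_Q[T] + 2 \geq 0$ holds in natural regimes (e.g.\ when $\E_Q[T]$ is not too large), and to rule out degenerate cases where the unconstrained optimum leaves the simplex; once this is in place, the argument reduces to the routine Lagrangian calculation above.
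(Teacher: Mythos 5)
Your proof is correct and takes essentially the same route as the paper's own proof, namely forming the Lagrangian for the equality constraint $\E_Q[r]=1$, setting the functional derivative to zero, and solving for the multiplier. You are in fact slightly more careful than the paper: you compute $f'(r) = 2(r-1)$ correctly (the paper writes $2r$, a slip that is absorbed into the multiplier and does not change the answer), and you explicitly justify global optimality via strict convexity and verify that the hypothesis $T - \E_Q[T] + 2 \geq 0$ keeps $r^\star$ non-negative, both of which the paper leaves implicit.
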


\begin{proof}
Consider the following Lagrangian:
\begin{align}
  h(r, \lambda) := \E_{Q}[f(r)] - \bb{E}_{Q}[r \cdot T] + \lambda (\E_{Q}[r] - 1)
\end{align}
where $\lambda \in \R$ and we formalize the constraint $r \in \Delta(r)$ with $\E_{Q}[r] - 1 = 0$. Taking the functional derivative $\partial h / \partial r$ and setting it to zero, we have:
\begin{align}
   & \ f'(r) \diff Q - T \diff Q + \lambda \\
   =& \ 2r \diff Q - T \diff Q + \lambda = 0, \nonumber
\end{align}
so $r = (T - \lambda) / 2$. We can then apply the constraint $\E_{Q}[r] = 1$, where we solve $\lambda = \bb{E}_Q[T] - 2$, and consequently the optimal $r = (T - \E_{Q}[T] + 2) / 2 \in \Delta(Q)$.
\end{proof}

In practice, when the constraint $T - \bb{E}_Q[T] + 2 \geq 0$ is not true, then one could increase the values when $T$ is small, using
\begin{align}
    \hat{T} = \max(T, c) + b
\end{align}
where $b, c$ are some constants that satisfies $\hat{T(\vx)} - \bb{E}_Q[\hat{T}] + 2 \geq 0$ for all $\vx \in \gX$. Similar to the KL case, we encourage higher weights to be assigned to higher quality samples.

If we plug in this optimal $r$, we obtain the following objective:
\begin{align}
    \E_P[T] - \E_Q[T] + \frac{1}{4} \E_Q[T^2] + \frac{1}{4} (\E_Q[T])^2 = \E_P[T] - \E_Q[T] - \frac{\Var_Q[T]}{4}.
\end{align}

Let us now consider $P = \pdata$, $Q = \frac{\pdata + G_\theta}{2}$, then the $f$-divergence corresponding to $f(u) = (u-1)^2$:
\begin{align}
    D_f(P \Vert Q) = \int_\gX \frac{(P(\vx) - Q(\vx))^2}{\frac{P(\vx) + Q(\vx)}{2}} \diff \vx,
\end{align}
is the squared $\chi^2$-distance between $P$ and $Q$. So the objective becomes:
\begin{align}
  \min_\theta \max_\phi  \E_{\pdata}[D_\theta] - \E_{G_\theta}[D_\phi] - \Var_{M_\theta}[D_\phi],
\end{align}
where $M_\theta = (\pdata + G_\theta) / 2$ and we replace $T/2$ with $D_\phi$. In comparison, the $\chi^2$-GAN objective~\citep{tao2018chi} for $\theta$ is:
\begin{align}
    \frac{(\E_{\pdata}[D_\theta] - \E_{G_\theta}[D_\phi])^2}{\Var_{M_\theta}[D_\phi]}.
\end{align}
They do not exactly minimize $\chi^2$-divergence, or a squared $\chi^2$-divergence, but a normalized version of the 4-th power of it, hence the square term over $\E_{\pdata}[D_\theta] - \E_{G_\theta}[D_\phi]$.

\section{Additional Experimental Details}
\label{app:exps}
For 2d experiments, we consider the WGAN and KL-WGAN objectives with the same architecture and training procedure. Specifically, our generator is a 2 layer MLP with 100 neurons and LeakyReLU activations on each hidden layer, with a latent code dimension of 2; our discriminator is a 2 layer MLP with 100 neurons and LeakyReLU activations on each hidden layer. We use spectral normalization~\citep{miyato2018spectral} over the weights for the generators and consider the hinge loss in~\citep{miyato2018spectral}. Each dataset contains 5,000 samples from the distribution, over which we train both models for 500 epochs with RMSProp (learning rate 0.2). The procedure for tabular experiments is identical except that we consider networks with 300 neurons in each hidden layer with a latent code dimension of 10. Dataset code is contained in {\hyperref[https://github.com/kevin-w-li/deep-kexpfam]{https://github.com/kevin-w-li/deep-kexpfam}}.

\section{Samples}
\label{app:samples}
We show uncurated samples from BigGAN trained with WGAN and KL-WGAN loss in Figures~\ref{fig:biggan-samples} and~\ref{fig:klbiggan-samples}.

\begin{figure}[H]
    \centering
    \begin{subfigure}{0.4\textwidth}
    \includegraphics[width=\textwidth]{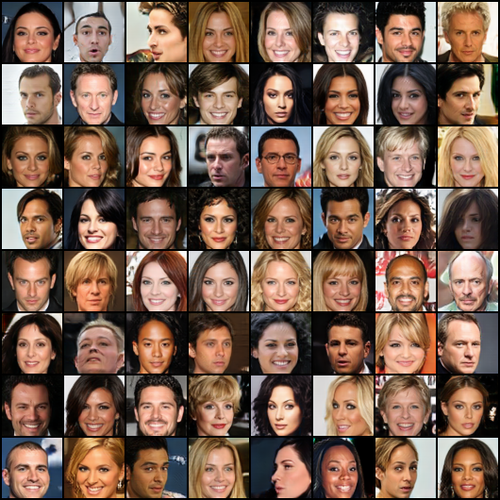}
    \caption{CelebA 64x64 samples trained with WGAN.}
    \label{fig:biggan-samples}
    \end{subfigure}
    ~
    \begin{subfigure}{0.4\textwidth}
    \includegraphics[width=\textwidth]{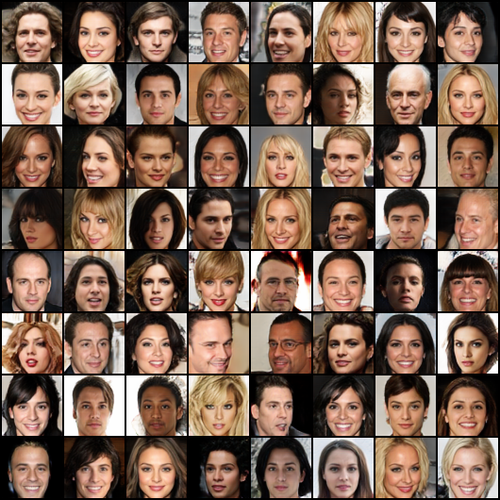}
    \caption{CelebA 64x64 Samples trained with KL-WGAN.}
    \label{fig:klbiggan-samples}
    \end{subfigure}
\end{figure}

\begin{figure}[H]
    \centering
    \begin{subfigure}{0.55\textwidth}
    \includegraphics[width=\textwidth]{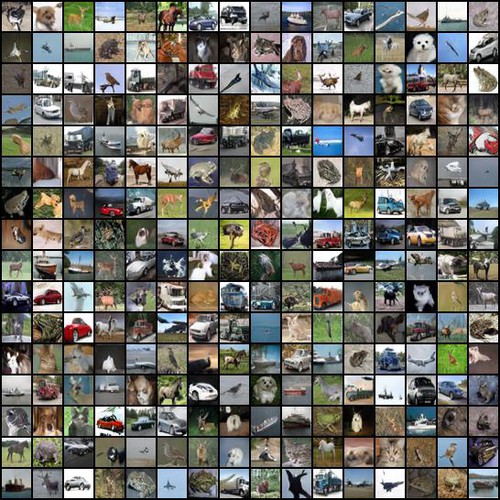}
    \caption{CIFAR samples trained with WGAN.}
    \label{fig:biggan-samples}
    \end{subfigure}
    ~
    \begin{subfigure}{0.55\textwidth}
    \includegraphics[width=\textwidth]{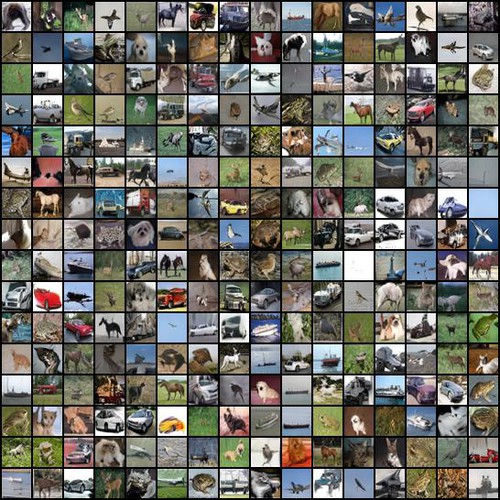}
    \caption{CIFAR samples trained with KL-WGAN.}
    \label{fig:klbiggan-samples}
    \end{subfigure}
\end{figure}

\end{document}